\documentclass[pdflatex,11pt]{article}
\usepackage{amsmath,amssymb,amsfonts}
\usepackage{algorithmic}
\usepackage{graphicx}
\usepackage{textcomp}
\usepackage[table,xcdraw]{xcolor}
\usepackage[boxed,linesnumbered,ruled,lined]{algorithm2e}
\usepackage{multirow}
\usepackage{pdflscape}
\usepackage[table,xcdraw]{xcolor}
\usepackage{comment}
\usepackage{paralist}
\usepackage{natbib} 
\usepackage{amsthm}
\usepackage{authblk}

\usepackage[utf8]{inputenc}
\usepackage{algorithmic}
\usepackage{cleveref}
\usepackage{tikz}
\usetikzlibrary{positioning}
\tikzstyle{smallvertex}=[circle, draw, fill=white, inner sep=2pt]
\definecolor{iris}{rgb}{0.35, 0.31, 0.81}
\definecolor{goldenpoppy}{rgb}{0.99, 0.76, 0.0}
\definecolor{caribbeangreen}{rgb}{0.0, 0.8, 0.6}

\definecolor{h}{rgb}{0.0, 0.8, 0.6}
\definecolor{v}{rgb}{0.35, 0.31, 0.81}

\newtheorem{definition}{Definition}
\newtheorem{property}{Property}
\newtheorem{corollary}{Corollary}
\newtheorem{example}{Example} 
\newtheorem{construction}{Construction}
\newtheorem{lemma}{Lemma}
\newtheorem{theorem}{Theorem}
\Crefname{construction}{Construction}{Constructions}

\begin{document}

\title{Position Spaces and Graphs}

\author[1]{Rita-Nathalia Assaf \thanks{ritanathalia.assaf@univ-angers.fr}}
\author[1]{Tom Davot\thanks{tom.davot@univ-angers.fr}}
\author[1]{Frédéric Lardeux\thanks{frederic.lardeux@univ-angers.fr}}
\author[1]{Frédéric Saubion\thanks{frederic.saubion@univ-angers.fr}}

\affil[1]{Univ Angers, LERIA, SFR MATHSTIC, F-49000 Angers, France}

\date{}

\date{}

\maketitle

\abstract{
In this paper, we introduce position graphs, a graph-based reasoning framework based on the formalization of position spaces. This framework utilizes two strict partial orders, representing horizontal and vertical alignment and precedence, to model the relative positions of discrete tokens. Unlike general qualitative spatial calculi, position graphs are constrained by a chain condition and compatibility requirements that focus on rows and columns. We provide a comprehensive theoretical analysis of this representation, beginning with a characterization of graph consistency. Conditions to ensure the consistency of position graphs are established. Furthermore, we investigate the computational complexity of structural pattern discovery, modeled as the induced subgraph isomorphism problem. We demonstrate that this problem remains NP-complete even within the restricted class of position graphs. While initially motivated by document processing, this work focuses on the underlying mathematical properties and algebraic consistency of position-based constraints, providing a formal logical layer that is independent of specific data extraction techniques.
}

\section{Introduction}

Reasoning about spatial organization is a fundamental problem in Artificial Intelligence when one seeks to infer structures from partial, symbolic information rather than from precise metric data. Such qualitative descriptions arise in domains ranging from document layout analysis~\citep{SaoutLS24} (a motivating application for this work) to constraint-based modeling
and spatial process calculi~\citep{KnightPPV12}, and more broadly in qualitative spatial reasoning~\citep{ChenCLWOY15}. 
Unlike approaches devoted to extracting spatial relations ~\citep{Buck2022,Tian2024RelativePosition},
we assume that qualitative positional constraints are already given, and our goal is to reason about their consistency, their logical consequences, and the occurrence of given structural patterns (e.g., find table structures in documents).

Qualitative spatial reasoning provides expressive calculi for representing such knowledge. RCC8~\citep{Randell92,CohnBGG97}
is a canonical example, offering a logical language for topological relations with well-studied complexity properties~\citep{RenzN99}
and practical SAT-based reasoning techniques~\citep{GlorianLMS18}. However, many layout-like tasks rely not on region topology
but on simple alignment and precedence information, such as left-to-right or top-to-bottom relations between discrete
tokens. This motivates the introduction of a lightweight qualitative framework capturing alignment constraints with precise mathematical semantics and a direct connection to constraint satisfaction problems.

We introduce \emph{position spaces}, based on a finite set of tokens with two alignment relations, horizontal and vertical, each modeled as a strict partial order. These relations encode one-dimensional precedence without committing to numerical  coordinates. To reflect the intended semantics of rows and columns, we impose a local no-branching restriction: the successors (resp. predecessors) of a token along an alignment must form a chain. Furthermore, the two alignments must be compatible, in the sense that they admit a coherent two-dimensional interpretation. A position space satisfying these  conditions is said to be consistent. Unlike dimension‑2 posets or products of chains, our notion of a position space introduces two independent strict partial orders whose interactions are governed by new exclusivity and non‑overlapping conditions, yielding a qualitatively new formal object. Moreover, we define a minimal set of algebraic constraints ensuring coherent two‑dimensional alignment, with regard to a canonical interpretation. 

Position spaces admit a natural representation as labeled directed graphs, called \emph{position graphs}, whose arcs encode horizontal or vertical precedence in the spirit of classical order graph-based representations. This enables two central 
reasoning tasks. First, we characterize consistency through forbidden patterns in mixed-label paths and provide a linear-time  
algorithm that constructs a valid row/column assignment whenever the space is consistent.  

Second, we study the discovery of patterns: given two position graphs, we ask whether the smaller appears as an induced, label-preserving subgraph of the larger. We prove that induced subgraph isomorphism remains NP-complete even within the restricted class of position graphs, establishing a hardness boundary that complements the tractability of consistency checking.

\paragraph{Contributions.}
The paper proposes the following contributions :
\begin{itemize}
\item 
 the formal definition of position spaces with no-branching and compatibility conditions capturing coherent row/column semantics,
\item  a labeled graph representation and a characterization of consistency via forbidden mixed-label paths,
\item 
a linear-time row/column assignment algorithm that allows us to check the consistency of a position space,
\item  an NP-completeness result for induced subgraph isomorphism on position graphs.
\end{itemize}

Note that we do not aim to present a new layout engine/solver for practical cases, as in \citep{SaoutLS23}. In contrast, the present work focuses on the essential theoretical foundations—consistency proofs and algorithmic complexity—for such an application. In particular, the identification of NP-completeness for pattern matching in position graphs provides a necessary theoretical constraint for practitioners, justifying the move toward the heuristic-based approaches. 

\paragraph{Organization.}
Section~\ref{sec:position_space} introduces position spaces and position graphs. In Section \ref{sec:csp_pos}, we study consistency from an algorithmic point of view. Section~\ref{sec:complexity} formalizes pattern discovery and proves NP-completeness.
Section~\ref{sec:related_works} discusses related work in ordered structures, graph theory, qualitative reasoning and constraint-based modeling.

\section{Position Space}
\label{sec:position_space}
We actually adopt a horizontal/vertical positioning point of view and hence, we rely on partial order relations to define the relative positioning of objects/items/tokens. An alignment relation defines how elements are aligned and in what order they are aligned to capture usual notions of row/column. 

\subsection{Position Space} 

\begin{definition}[Alignment Relation, see \Cref{fig:def_align}]
  Given a set of tokens ${\cal T}$, an alignment relation~$<$ defined on ${\cal T} \times {\cal T}$ is a strict partial order such that $\forall t,t',t'' \in \mathcal{T}$%\FL{tous différents ?}
  , we have 
  \begin{itemize}
    \item $(t< t' \wedge t< t'') \Rightarrow (t' < t'' \vee t'' < t')$, and
    \item $(t'< t \wedge t''< t) \Rightarrow (t' < t'' \vee t'' < t')$.
  \end{itemize}
  \label{def:al_rel}
\end{definition}

\begin{figure}[h]
\centering 
\begin{tikzpicture}
  \foreach[count=\i from 1] \x/\y/\l in {0/0/$t'$,1/-1/$t$,2/0/$t''$,0/-4/$t'$,1/-3/$t$,2/-4/$t''$}
  \node[smallvertex,label=above:{\l}] (\i) at (\x,\y) {};

  \foreach \a/\b in {1/2,3/2,5/4,5/6}
  \draw[thick,->,draw=v] (\a)--(\b);

  \foreach \I/\Y/\X in {{$t'$,$t''$,$t$}/0/1,{$t''$,$t'$,$t$}/0/2,{$t$,$t'$,$t''$}/-3/1,{$t$,$t''$,$t'$}/-3/2}{
    \foreach[count=\y from 1] \J in \I {
      \node[smallvertex,label=right:{\J}] (B\y) at (2+\X*3,1.5-\y+\Y) {};
    }
    \draw[thick,->,draw=v] (B1)--(B2);
    \draw[thick,->,draw=v] (B2)--(B3);
    \draw[thick,->,draw=v] (B1) to[out=225,in=135] (B3);
  }
  \foreach \x/\y/\l in {3/-0.5/$\Rightarrow$,3/-3.5/$\Rightarrow$,6.5/-0.5/or,6.5/-3.5/or}
  \node at (\x,\y) {\Large \l};

\end{tikzpicture}

\caption{\label{fig:def_align}Illustration of conditions satisfied by an alignment relation.}
\end{figure}

A \emph{chain} of an alignment relation $<$ on a set of tokens $\mathcal{T}$ is a totally ordered subset of $\mathcal{T}$, \textit{i.e.} a sequence of tokens $(t_1,\dots,t_k)$ such that for all $i\in [k-1]$, $t_i<t_{i+1}$, where $[k-1]$ is the integer interval $[1,\cdots,k-1]$. The conditions introduced in \Cref{def:al_rel} prevent the presence of inconsistent alignments from our point of view, meaning alignments that correspond to an incongruent division of the same row/column. Note that we consider a strict partial order relation. Although $<$ is formally a strict partial order, our intended semantics for an alignment is one-dimensional: horizontal (resp. vertical) alignment represents left-to-right (resp. top-to-bottom) ordering within a single row (resp. column). We get the following property immediately.

\begin{property}[Chain Condition]
For every token $t\in {\cal T}$ and an alignment relation~$<$ on $\mathcal{T}$, both the set of successors $\{u \in {\cal T}\mid t<u\}$ and the set of predecessors $\{u \in {\cal T}\mid u<t\}$ are chains.  
\end{property}

\begin{definition}[Position Space]
A position space is a triplet $({\cal T},<_h,<_v)$ such that $<_h$ (horizontal alignment) and $<_v$ (vertical alignment) are two alignment relations defined on ${\cal T}$.
\end{definition}

A  position space defines the organization of tokens according to their vertical and horizontal alignments. We assume that every element of ${\cal T}$ must have a relative positioning with respect to at least one other element (either vertical or horizontal). Note that this latter assumption is not mandatory in our framework. If one wishes to relax this constraint, it suffices to consider reflexive relations, which does not fundamentally change the rest of the discussion.

\subsection{Consistency of a Position Space}

As mentioned in the introduction, we aim to ensure that our alignment relations represent horizontal and vertical positions that are coherent with their canonical interpretation. Hence, in this section, we precisely define a compatibility property to ensure the consistency of a position space.  

\begin{definition}[$v$-chain]
\label{def:vchain}
Let $\Pi=(T,<_{h},<_{v})$ be a position space. A sequence of tokens $(t_1,\dots,t_k)$
is called a \emph{$v$-chain} if the following two conditions hold:
\begin{enumerate}
    \item for every $i \in \{1,\dots,k-1\}$, at least one of the following holds:
    \[
        t_i <_{h} t_{i+1}, \qquad
        t_{i+1} <_{h} t_i, \qquad
        t_i <_{v} t_{i+1};
    \]
    \item there exists at least one index $i \in \{0,\dots,k-1\}$ such that
    \[
        t_i <_{v} t_{i+1}.
    \]
\end{enumerate}
An \emph{$h$-chain} is defined symmetrically.
\end{definition}

Intuitively, in a v-chain, consecutive tokens are either related by $<_h$, regardless of the direction, or ordered by $<_v$, with at least one pair ordered by $<_v$. An \emph{h-chain} is defined symmetrically.

\begin{definition}[Compatibility of alignment relations]
\label{def:compatibility}
Given a position space $\mathcal{P}=({\cal T},<_h,<_v)$, $<_h$ and $<_v$ are said to be compatible if and only if for any
pair of tokens $t,t' \in \mathcal{T}$ such that $t <_h t'$ (resp. $t<_v t'$), there is no $v$-chain (resp. $h$-chain) between $t$ and $t'$.
\end{definition}

Observe that this condition ensures that two tokens can be ordered by at most one alignment relation.

\begin{definition}
\label{def:consist_space}
A position space $({\cal T},<_h,<_v)$ whose relations are compatible is said to be consistent.
\end{definition}

\begin{figure}[htbt!]
  \centering 
  \begin{tikzpicture}
    \foreach[count=\i from 4] \x/\y/\l/\p in {4/0/$t_1$/above,6/0/$t_2$/above,4/-1/$t_3$/below,6/-1/$t_4$/below,
    8/0/$t_1$/above,10/0/$t_2$/right,8/-1/$t_3$/below,10/1/$t_4$/above,12/1/$t_5$/above,12/-1/$t_6$/below}
    \node[smallvertex,label=\p:{\l}] (\i) at (\x,\y) {};

    \foreach \a/\b/\c in {4/5/h,4/6/v,5/7/v,7/6/h,8/9/h,11/9/v,11/12/h,12/13/v,13/10/h,8/10/v}
    \draw[thick,->,draw=\c] (\a)--(\b);

  \end{tikzpicture}

\caption{Illustration of inconsistent position spaces. Vertical alignments are depicted with a blue arc while horizontal alignments are depicted with a green arc. In all cases, consistency is violated because $t_1 <_v t_3$ and there is a h-chain between $t_1$ and $t_3$ containing a horizontal alignment relation. \label{fig:Overlap}}
\end{figure}

Figure~\ref{fig:Overlap} illustrates two basic cases of inconsistent position spaces. 
Note that grid or lattice graphs may share some similarities with our approach; however, such graphs are generally regular and consider integer-coordinate points as vertices, with edges representing pairs of vertices that share one coordinate. Here, we aim for a more specific and declarative representation based on alignments.

\subsection{Position Graph}
Graphs constitute a natural representation of partial ordering relations (e.g., Hasse diagrams \citep{Hasse1952} that also avoid transitivity direct  representation). Moreover, as pointed out in the introduction, one of our aims is to identify particular patterns within position spaces, and graphs are a natural framework for modeling this problem. 

\begin{definition}[Position Graph]
Given a position space $\Pi=({\cal T},<_h,<_v)$, the induced position graph $GP_\Pi$ is defined as a labeled directed graph $({\cal T},{\cal A},{\cal E})$ where ${\cal A} \subseteq {\cal T} \times {\cal T}$ and ${\cal E} : \mathcal{T} \times \mathcal{T} \to \{h,v\}$ such that:
\begin{itemize}
\item $\forall t,t' \in {\cal T}, t <_h t' \Leftrightarrow (t,t') \in {\cal A} \wedge {\cal E}(t,t') = h$, and
\item $\forall t,t' \in {\cal T}, t <_v t' \Leftrightarrow (t,t') \in {\cal A} \wedge {\cal E}(t,t') = v$.
\end{itemize}
\label{def:GP}
\end{definition}

\begin{example}
\label{exemple}
Let ${\cal T}=\{1,2,3,4,5,6,7,8,9,10,11\}$. We define our position space by setting:
$1 <_h 2, 2 <_h 3, 3 <_h 4, 5 <_h 6, 6 <_h 7, 8 <_h 9, 9 <_h 10$,
$1 <_v 5, 5 <_v 8, 2 <_v 6, 6 <_v 9, 9 <_v 11, 3 <_v 7, 7 <_v 10$.
\end{example}

For \Cref{exemple}, we obtain the graph shown in Figure \ref{fig:GP}. In Figure \ref{fig:GP}, we omit the labels of the arcs because those labeled 'h' will be drawn in green and horizontally and those labeled 'v' in blue and vertically.

\begin{figure}[htbt]
\centering
\begin{tikzpicture}
  \newcounter{clab}
  \setcounter{clab}{0}
%    You can now write \verb|\themycounter| to obtain \themycounter.

  \foreach \x in {1,...,3}
  \foreach \y in {1,...,3}{
    \stepcounter{clab}
    \node[smallvertex,label=-135:{$t_{\theclab}$}] (\x\y) at (2.5*\x,-2.5*\y) {};
    \foreach \i in {1,...,\x}{
      \ifthenelse{\equal{\i}{\x}}{}{
        \draw[thick,->,draw=h] (\i\y) to[bend left] (\x\y);
      }
    }
    \foreach \i in {1,...,\y}{
      \ifthenelse{\equal{\i}{\y}}{}{
        \draw[thick,->,draw=v] (\x\i) to[bend left] (\x\y);
      }
    }
  }
  \node[smallvertex,label=-135:{$t_{10}$}] (10) at (2.5*4,-2.5) {};
  \foreach \i in {1,2,3} \draw[thick,->,draw=h] (\i1) to[bend left] (10);
  \node[smallvertex,label=-135:{$t_{11}$}] (11) at (2.5*2,-2.5*4) {};
  \foreach \i in {1,2,3} \draw[thick,->,draw=v] (2\i) to[bend left] (11);
\end{tikzpicture}

\caption{Position graph $GP$.\label{fig:GP}}
\end{figure}

\begin{definition}[Consistent Position Graph]
A position graph is said to be consistent if and only if its corresponding position space is consistent. 
\label{def:consistency}
\end{definition}

A $v$-\emph{cycle} is a sequence of tokens $(t_1,\dots,t_k=t_1)$ such that there is an arc with label $v$ between $t_i$ and $t_{i+1}$ for some $i\in[k-1]$ and for all $i \in [k-1]$, either $(t_i,t_{i+1}) \in \mathcal{A}$ or $(t_{i+1},t_i)\in \mathcal{A}$ with $\mathcal{E}(t_{i+1},t_i)=h$. A $h$-\emph{cycle} is defined symmetrically. 
\Cref{def:compatibility} yields the following characterization.

\begin{property}[Characterization of a Consistent Position Graph]
\label{prop:consitent_graph}
A position graph $GP_\Pi= (\mathcal{T},\mathcal{A},\mathcal{E})$ is consistent if and only if it does not contain a $v$-cycle or an $h$-cycle.
\end{property}

\begin{proof}
  Let $(\mathcal{T},<_h,<_v)$ be a consistent position space and let $GP_\Pi$ be its corresponding position graph. Let $t$ and $t'$ be two tokens such that $t <_h t'$ (resp. $t<_v t'$). If there is a $v$-chain (resp. h-chain) $(t_1=t,\dots,t_k=t')$ between $t$ and $t'$, then there is a $v$-cycle (resp. $h$-cycle) $(t_1=t,\dots,t_k=t',t)$ in $GP_\Pi$ since there is an arc labelled $h$ (resp. $v$) between $t$ and $t'$. Moreover, if there is a $v$-cycle (resp. $h$-cycle) $(t_1=t,\dots,t_{k-1}=t',t_k=t)$ in $GP_\Pi$, then there is a $v$-chain (resp. $h$-chain) $(t_1=t,\dots,t_{k-1}=t')$ in $\Pi$. Hence, $(\mathcal{T},<_h,<_v)$ does not contain a $v$-chain (resp. $h$-chain) if and only if $GP_\Pi$ does not contain a $v$-cycle (resp. $h$-cycle).  
  
\end{proof}

\begin{comment}
   
\begin{corollary}
\label{corr:incons}
In an inconsistent position graph $GP_\Pi=({\cal T},{\cal A},{\cal E})$, 
 $\forall (t,t') \in {\cal A}$, if there exists another directed path from $t$ to $t'$ in  $GP_\Pi$ then this path contains at least two arcs $(u,u'),(v,v') \in {\cal A}$ such that ${\cal E}(t,t') \neq {\cal E}(u,u')$ and ${\cal E}(t,t') \neq {\cal E}(v,v')$. 
\end{corollary}

\begin{proof}
We proceed by contradiction. Let us consider $(t,t') \in {\cal A}$, such that there exists an alternative path another directed path from $t$ to $t'$ in  $GP_\Pi$ that contains a unique arc $a=(t_{a_{in}},t_{a_{out}}) \in {\cal A}$ such that ${\cal E}(t,t') \neq {\cal E}(a)$. This path can be written as a sequence of directed arcs. 

$$(t,t_1),(t_1,t_2),\cdots, (t_i,t_{a_{in}}),(t_{a_{in}},t_{a_{out}}),(t_{a_{out}},t_{i+1}),\cdots,(t_n,t').$$

Due to the transitivity of the alignment relations, since all arcs, except $a$, have the same label, there exists necessarily a shortest path 

$$(t,t_{a_{in}}),(t_{a_{in}},t_{a_{out}}),(t_{a_{out}},t').$$ 

Due to Definition \ref{def:al_rel} and the fact that $(t,t') \in {\cal A}$, we get, by generalization to $4$ tokens, that $t,t',t_{a_{in}},t_{a_{out}}$ must be ordered. Hence, there exists either an arc $(t_{a_{in}},t_{a_{out}})$ or $(t_{a_{out}},t_{a_{in}})$ whose label is identical to the label of $(t,t')$, which is contradictory with the assumption. Since the alternate path cannot contain a single arc with a different label, it necessarily contains at least two such arcs. 
\end{proof}
\end{comment}

In the next section, we study the consistency of the position space from an algorithmic perspective. We propose an algorithm that checks if a position space is consistent by trying to assign row and column indices to each element of the position space.

%Remind that our main concern is to identify patterns (i.e., specific organization of some tokens) within the position space, and thereby in the position graph, turning pattern search into a subgraph isomorphism problem. Hence, the next section is devoted to the computational study of this problem in our specific context. 

\section{Rows/Columns Embedding of Position Spaces}
\label{sec:csp_pos}

We now address the Rows/Columns (R/C) embedding problem, which consists of assigning concrete positions to tokens. Intuitively, given a consistent position space  $\Pi=(\mathcal{T},<_h,<_v)$, we want to introduce notions of rows and columns such that the alignment relations $<_h$ and $<_v$ are respected: if $t <_h t'$, then $t$ and $t'$ should be on the same row with $t'$ to the right of $t$ and if $t <_v t'$, then $t$ and $t'$ should be on the same column with $t'$ below $t$. We first define rows and columns with regard to the notion of chains previously defined.

\begin{definition}[Rows and Columns]
Given a position space $\Pi=({\cal T},<_h,<_v)$, a \emph{row}  (resp. a \emph{column}) is a maximal chain contained in $<_h$ (resp. $<_v$). $\mathcal{R}_\Pi$ and $\mathcal{C}_\Pi$ are the sets of rows and columns of $\Pi$, respectively.
\label{def:LC}
\end{definition}

Observe that a token necessarily belongs to a column and a row: a token not comparable in $<_h$ (resp. $<_v$) forms a singleton row (resp. column). Moreover, a token cannot belong to two distinct rows (resp. columns), as this would either contradict the transitivity of $<_h$ (resp. $<_v$) or the maximality of the chains. Hence, the following property follows. 

\begin{property}
  \label{prop:LC partition}
Given a consistent position space $\Pi=({\cal T},<_h,<_v)$, the set of rows (resp. columns) ${\cal R}_\Pi$ (resp. ${\cal C}_\Pi$) forms a partition of ${\cal T}$. 
\end{property}

Given a consistent position graph $GP_\Pi = (\mathcal{T},\mathcal{A},\mathcal{E})$, we define the graph of columns $G^\Pi_C$ of $GP_\Pi$ obtained by contracting each column into a single vertex. Formally,

\[G^\Pi_C = (\mathcal{C}_\Pi,\{(C,C') \mid \exists t \in C, t' \in C', (t,t') \in \mathcal{A}\}). \]
The graph of rows of $G^\Pi_R$ of $GP_\Pi$ is defined symmetrically.
Note that the rows % (maximal chains in \(\prec_h\))
and columns % (maximal chains in \(\prec_v\))
as in Definition \ref{def:LC} can be computed in linear time in the input size, i.e., $O(|{\cal T}|+|{\cal A}|)$ by computing connected components in the subgraphs induced by h-arcs and v-arcs, respectively. % (when considered as undirected graphs)

% $\mathcal{R}_\Pi$ and $\mathcal{C}_\Pi$ are assumed to be totally ordered sets\footnote{Without loss of generality $\mathcal{R}_\Pi$ and $\mathcal{C}_\Pi$ can be assumed to be isomorphic to finite subsets of $\mathbb{N}$.} using an ordering relation  $<_{\mathcal{R}_\Pi}$ and $<_{\mathcal{C}_\Pi}$ (Note that subscript will be omitted in the remainder of the paper, when clear from context). 

\begin{definition}[R/C embedding]
\label{def:RCEmb}
Given a position space $\Pi=(\mathcal{T},<_h,<_v)$, a \emph{R/C embedding} of $\Pi$ is a mapping $\mathcal{T} \to |{\mathcal{R}_\Pi}| \times |\mathcal{C}_\Pi| $ that assigns to each token $t\in \mathcal{T}$ a row number $x_t$ and a column number $y_t$ such that:
\begin{itemize}
  \item $x_t = x_{t'}$ if and only if $t <_h t'$ or $t' <_h t$,
  \item if $t <_h t'$, then $y_t < y_{t'}$, 
  \item $y_t = y_{t'}$ if and only if $t <_v t'$ or $t' <_v t$, and
  \item if $t <_v t'$, then $x_t < x_{t'}$.
\end{itemize}
\end{definition}

\begin{property}
If a position space $\Pi$ is inconsistent, then it does not admit an R/C embedding.
\end{property}

%\begin{proof}
 % Let $\Pi$ be an inconsistent position space and suppose, toward a contradiction, that $\Pi$ admits an R/C embedding. Without loss of generality, suppose $\Pi$ contains two tokens $t$ and $t'$ such that $t <_h t'$ and there exists a $v$-chain $(t_0=t,\dots,t_k=t')$ such that $t_i <_h t_{i+1}$ for some $i \in [k-1]$ (see Definition \ref{def:consist_space} and Definition \ref{def:compatibility}). Since $t<_h t'$, $t'$ belongs to the same row, that is, $x_t=x_{t'}$. Observe that for any $j \in [k-1]$, we have $x_{t_i}=x_{t_{i+1}}$ if $t_i<_h t_{i+1}$ or $x_{t_i}< x_{t_{i+1}}$ if $t_i<_v t_{i+1}$. Hence, by induction we have that $x_t<x_{t_{i+1}}\leq x_{t'}$ which is a contradiction.
%\end{proof}

\begin{proof}
Assume, toward a contradiction, that $\Pi=(T,<_{h},<_{v})$ is inconsistent and admits an R/C embedding. Since $\Pi$ is inconsistent, by Definition \ref{def:compatibility} there are two possible cases. We treat one of them, the other being symmetric. Suppose there exist two tokens $t,t' \in T$ such that $t <_{h} t'$ and there exists a $v$-chain $(t_0,\dots,t_k)$ with $t_0=t$ and $t_k=t'$.

Since $t <_{h} t'$, we must have $x_t = x_{t'}$. Now consider any consecutive pair $(t_i,t_{i+1})$ in the $v$-chain. By definition of a $v$-chain, one of the following holds:
\begin{itemize}
    \item $t_i <_{h} t_{i+1}$, in which case $x_{t_i}=x_{t_{i+1}}$;
    \item $t_{i+1} <_{h} t_i$, in which case again $x_{t_i}=x_{t_{i+1}}$;
    \item $t_i <_{v} t_{i+1}$, in which case $x_{t_i}<x_{t_{i+1}}$.
\end{itemize}

Therefore, for every $i \in \{0,\dots,k-1\}$, we have $x_{t_i} \le x_{t_{i+1}}$.
Moreover, since $(t_0,\dots,t_k)$ is a $v$-chain, there exists at least one index $i$ such that
$t_i <_{v} t_{i+1}$, and hence $x_{t_i} < x_{t_{i+1}}$. We get $x_t = x_{t_0} < x_{t_k} = x_{t'},
$ which contradicts $x_t=x_{t'}$. The case where $t <_{v} t'$ and there exists an $h$-chain from $t$ to $t'$ is symmetric, using the column numbers $y_t$ instead of the row numbers $x_t$. Hence, $\Pi$ does not admit an R/C embedding.
\end{proof}

\begin{algorithm}
\caption{R/C embedding of a position space}
\label{alg:linecol}
\DontPrintSemicolon

\KwIn{Position Graph $GP_\Pi=({\cal T},{\cal A},{\cal E})$}

\KwOut{
  A R/C embedding for $\Pi$ or \textsc{Inconsistent}
}
\If{$G^\Pi_C$ or $G^\Pi_R$ contain a directed cycle}{
  \Return{ \textsc{Inconsistent}}
}

 $T_R = \{R_1,\dots,R_{|\mathcal{R}_\Pi|}\} \gets$ topological ordering of $R_\Pi$\;
\ForEach{$R_i \in T_R$}
{ 
  \ForEach{$t \in R_i$}{
    $x_t \gets i$ 
  }
}
 $T_C = \{C_1,\dots,C_{|\mathcal{C}_\Pi|}\} \gets$ topological ordering of $C_\Pi$\;
\ForEach{$C_i \in T_C$}  
{ 
  \ForEach{$t \in C_i$}{
    $y_t \gets i$ \;
  }
}

\end{algorithm}

\begin{lemma}
  \label{lemma:alg inconsistent}
  A position graph $GP_\Pi = (\mathcal{T},\mathcal{A},\mathcal{E})$ is inconsistent if and only if either its graph of columns or its graph of rows contains a directed cycle.
\end{lemma}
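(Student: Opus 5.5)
We prove the two directions separately, using \Cref{prop:consitent_graph}, which reduces inconsistency to the existence of a $v$-cycle or an $h$-cycle in $GP_\Pi$. By symmetry we work with $v$-cycles and the graph of rows $G^\Pi_R$; the $h$-cycle and graph-of-columns case is identical with the roles of $h$ and $v$ exchanged. We read the arcs of $G^\Pi_R$ as the arcs induced by $v$-labelled arcs between distinct rows, consistent with how Algorithm~\ref{alg:linecol} uses $G^\Pi_R$ for the row numbers $x_t$. With this reading, an $h$-arc stays inside a single row by \Cref{def:LC}.

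\emph{($\Rightarrow$)} Suppose $GP_\Pi$ contains a $v$-cycle $(t_1,\dots,t_k=t_1)$. We map each $t_i$ to its row $R(t_i)$; this is well defined, since a token lies in exactly one maximal $h$-chain by the remark preceding \Cref{prop:LC partition}, and that remark only uses the alignment axioms, not consistency. Each step of the cycle is either an $h$-arc, taken in either direction, or a forward $v$-arc $(t_i,t_{i+1})$. An $h$-step keeps us in the same row. A $v$-step gives the arc $(R(t_i),R(t_{i+1}))$ of $G^\Pi_R$, or a self-loop if both endpoints lie in the same row. The row sequence is therefore a closed walk in $G^\Pi_R$ containing at least one arc, since the cycle has at least one $v$-step. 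Deleting repetitions yields a directed cycle, where a self-loop counts as a cycle of length one. If the convention excludes loops, a $v$-arc whose endpoints lie in the same row already contradicts nothing directly. In that case we note instead that such a pair is $h$-related by the chain property of the row, and then $t <_h t'$ together with $t<_v t'$ gives an immediate $h$-cycle and $v$-cycle. We would either allow loops in $G^\Pi_R$ or treat this case separately.

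\emph{($\Leftarrow$)} Suppose $G^\Pi_R$ has a directed cycle $R_{1}\to R_{2}\to\dots\to R_{m}\to R_{1}$. For each arc $R_j\to R_{j+1}$ we pick a witness $v$-arc $(a_j,b_{j+1})$ with $a_j\in R_j$ and $b_{j+1}\in R_{j+1}$. Inside row $R_{j+1}$, the tokens $b_{j+1}$ and $a_{j+1}$ are either equal or comparable under $<_h$, since a row is a chain. Each of these is a single $h$-step allowed in either direction. Concatenating gives the closed sequence $a_1, b_2, a_2, b_3,\dots, a_m, b_1, a_1$, after dropping repeated equal tokens. Each step is either a forward $v$-arc or an $h$-arc in some direction, and at least one step is a $v$-arc. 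This is exactly a $v$-cycle, so $GP_\Pi$ is inconsistent by \Cref{prop:consitent_graph}.

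The main obstacle is the bookkeeping around degenerate cases. The first is self-loops, where a $v$-arc lies inside a row, as discussed above. The second is making sure the definition of $v$-cycle in the paper, whose arcs are either forward arcs of any label or backward $h$-arcs, matches the walk we build. Here we check that forward $h$-arcs are also permitted, which they are, since any $(t_i,t_{i+1})\in\mathcal{A}$ qualifies. Once these cases are fixed, both directions are direct translations between walks in $GP_\Pi$ and walks in the contracted graph.
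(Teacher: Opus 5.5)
Your proof is correct and is essentially the paper's argument with rows and $v$-cycles in place of columns and $h$-cycles: you project a $v$-cycle onto its sequence of rows to get a closed walk in $G^\Pi_R$, and conversely you lift a cycle of $G^\Pi_R$ through witness $v$-arcs joined by in-row $h$-steps, concluding with Property~\ref{prop:consitent_graph} (you are also more careful than the paper in noting that rows partition $\mathcal{T}$ without assuming consistency). The one thing to fix is your hedge on self-loops: they must count as cycles, as the paper implicitly assumes in the proof of Theorem~\ref{theorem:grid embedding}, and the alternative of ``treating that case separately'' cannot work, since the space with only $a<_h b$ and $a<_v b$ is inconsistent while both contracted graphs consist of a single vertex and so are acyclic once loops are excluded.
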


\begin{proof}
  Suppose without loss of generality that there is a cycle $(C_1,\dots,C_k=C_1)$ in $G^\Pi_C$. Then, by construction of $G^\Pi_C$, for all $i \in [k-1]$, there is a token $t^{out}_i\in C_i$ and a token $t^{in}_{i+1} \in C_{i+1}$ such that $\mathcal{E}(t^{out}_i,t^{in}_{i+1})=h$. Moreover, for each $\in [k]$, we have that $\mathcal{E}(t^{in}_i,t^{out}_i)=v$ or $\mathcal{E}(t^{out}_i,t^{in}_i)=v$. Thus, the sequence $(t^{out}_1,t^{in}_2,t^{out}_2,t^{in}_3,\dots,t^{in}_1,t^{out}_1)$ is an $h$-cycle and so, $GP_\Pi$ is inconsistent.

 Now suppose that $GP_\Pi$ is inconsistent. And suppose without loss of generality that $GP_\Pi$ contains an $h$-cycle $(t_1,\dots,t_k=t_1)$. For $w=(C_1,\dots,C_k)$ be the sequence of columns such that $t_i \in C_i$ for all $i \in [k]$. Clearly, if $C_i \neq C_{i+1}$, then $\mathcal{E}(t_i,t_{i+1})=v$ by construction of $G^\Pi_C$, and so $(C_i,C_{i+1})$ is an arc in $G^\Pi_C$. Hence, $w$ is a closed walk in $G^\Pi_C$ and thus, $G^\Pi_C$ contains a cycle. 
 
\end{proof}

It follows that if $GP_\Pi$ is consistent, then we can order the vertices $G^\Pi_C$ and $G^\Pi_R$ and use that order to produce an R/C embedding for $\Pi$.

\begin{theorem}
  \label{theorem:grid embedding} If $GP_\Pi$ is consistent, then \Cref{alg:linecol} produces an R/C embedding for $\Pi$ in $\mathcal{O}(|\mathcal{T}|+|\mathcal{A}|)$ time.
\end{theorem}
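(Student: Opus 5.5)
By \Cref{lemma:alg inconsistent}, if $GP_\Pi$ is consistent then neither $G^\Pi_C$ nor $G^\Pi_R$ has a directed cycle. Hence \Cref{alg:linecol} does not stop at its first test, and the two topological orderings exist. The proof then splits into two parts: checking the four conditions of \Cref{def:RCEmb} for the computed values $x_t,y_t$, and bounding the running time.

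\textbf{Row conditions.} By \Cref{prop:LC partition}, rows partition $\mathcal{T}$, so every token gets exactly one row index $x_t$. If $t<_h t'$ or $t'<_h t$, then $t$ and $t'$ lie in a common chain of $<_h$, hence in the same maximal chain, i.e.\ the same row, and so $x_t=x_{t'}$. Conversely, suppose $x_t=x_{t'}$ with $t\neq t'$. Then $t$ and $t'$ are in the same row, which is a chain of $<_h$, so they are comparable. This converse is the step where I expect some care is needed. Rows are computed as connected components of the $h$-arcs, and a connected component of $h$-arcs is totally ordered. To see this, use the no-branching conditions of \Cref{def:al_rel} together with transitivity, by induction along an undirected $h$-path: two tokens sharing an $h$-neighbour are comparable, so each step of the path can be absorbed into a comparability. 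Thus a component is a single chain, and it is maximal.

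\textbf{Ordering conditions.} If $t<_v t'$, then $(t,t')$ is an arc whose endpoints lie in different rows. They are different because consistency forbids two tokens being ordered by both relations: a single $v$-arc already forms an $h$-chain of length one between $t$ and $t'$, so \Cref{def:compatibility} rules out $t<_h t'$. Hence $(R(t),R(t'))$ is an arc of $G^\Pi_R$. One subtlety is that $G^\Pi_R$ as defined also contains arcs coming from $h$-arcs, but those are loops inside a single row and can be ignored; only inter-row arcs matter for the topological order. By the topological ordering, $x_t<x_{t'}$. The column conditions follow symmetrically.

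\textbf{Complexity.} Computing rows and columns is a connected-components computation on the subgraphs of $h$-arcs and $v$-arcs, which takes $O(|\mathcal{T}|+|\mathcal{A}|)$ time, as already noted in the paper. Building $G^\Pi_C$ and $G^\Pi_R$ amounts to one scan of $\mathcal{A}$, after which $|V|\le|\mathcal{T}|$ and $|E|\le|\mathcal{A}|$. Cycle detection and topological sorting (DFS or Kahn's algorithm) are linear in the size of these graphs. The two assignment loops touch each token once. The total is therefore $O(|\mathcal{T}|+|\mathcal{A}|)$.
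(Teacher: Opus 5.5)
Your proof is correct and follows essentially the paper's route. The row/column partition gives well-definedness, Lemma~\ref{lemma:alg inconsistent} gives acyclicity, the topological order forces $x_t<x_{t'}$ along each inter-row arc, and linear-time standard primitives give the bound; you argue directly where the paper argues by contradiction, and your connected-components argument usefully justifies a step the paper only asserts. One detail to tighten: to put the endpoints of $t<_v t'$ in different rows you must also exclude $t'<_h t$, which no length-one chain witnesses under the directed reading of ``between'', but the $v$-cycle $(t,t',t)$ together with Property~\ref{prop:consitent_graph} settles it.
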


\begin{proof}
  First, by \Cref{prop:LC partition}, each token belongs to a column and a row and thus, each token $t$ receives values $x_t$ and $y_t$ on lines 7 and 13.
  Further, we need to show that the mapping is an R/C embedding.
  Let $t$ and $t'$ be any pair of tokens such that $t <_h t'$. We want to show that $y_t<y_{t'}$ and $x_t=x_{t'}$. By definition, $t$ and $t'$ belong to the same row and since the algorithm gives the same value $x_t$ to all tokens of a row (line 7), we have $x_t=x_{t'}$. Suppose for contradiction that there are two tokens $t$ and $t'$ such that $t <_h t'$ and $y_{t'}\leq y_t$. Let $C$ and $C'$ be the columns containing $t$ and $t'$, respectively. Since $t<_h t'$, there is an arc between $C$ and $C'$. If $C =C'$, then there is a loop in $G^\Pi_C$, which is a contradiction. So, $C\neq C'$ and we necessarily have that $y_t\neq y_{t'}$ since the algorithm selects one column at a time. Since $y_{t'} < y_t$, the column $C'$ has been selected before $C$ by the algorithm. But then, since there is an arc between $C$ and $C'$, it contradicts $G^\Pi_C$ being acyclic. Hence, we cannot have $y_{t'}\leq y_t$. The case where $t <_v t'$ is symmetric.   Finally, since the algorithm assigns a different value $y_t$ (resp. $x_t$) to two tokens if they do not belong to the same column (resp. row), it follows that the produced mapping is an R/C embedding.

  Regarding the complexity, the construction of $G^\Pi_C$ and $G^\Pi_R$ can be done in $\mathcal{O}(|\mathcal{T}|+|\mathcal{A}|)$. Detecting a cycle in $G^\Pi_C$ or $G^\Pi_R$ can be done in $\mathcal{O}(|\mathcal{T}|+|\mathcal{A}|)$ using a Breadth-First Search. Computing a topological ordering can be done in $\mathcal{O}(|\mathcal{T}|+|\mathcal{A}|)$ using Kahn's algorithm. We obtain an overall complexity of $\mathcal{O}(|\mathcal{T}|+|\mathcal{A}|)$.
\end{proof}

We may now fully characterize consistent position spaces with regard to the induced notions of rows and columns. 

\begin{corollary}
There is a R/C embedding for $\Pi$ if and only if $\Pi$ is consistent.
\end{corollary}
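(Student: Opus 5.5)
The corollary is the conjunction of two results already in hand, so I would prove the two implications separately and simply invoke them.

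For the direction ``R/C embedding $\Rightarrow$ consistent'' I would argue by contraposition. This is exactly the Property stated just before \Cref{alg:linecol}: if $\Pi$ is inconsistent, it admits no R/C embedding. That proof follows a $v$-chain (resp. $h$-chain) between two tokens $t <_h t'$ (resp. $t<_v t'$). Along the chain the row numbers $x$ (resp. column numbers $y$) are non-decreasing and strictly increase at least once, which contradicts $x_t=x_{t'}$ (resp. $y_t=y_{t'}$). So any $\Pi$ admitting an embedding must be consistent.

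For the direction ``consistent $\Rightarrow$ R/C embedding'' I would take $\Pi$ consistent. By \Cref{def:consistency}, $GP_\Pi$ is then a consistent position graph. By \Cref{lemma:alg inconsistent}, neither $G^\Pi_C$ nor $G^\Pi_R$ contains a directed cycle, so \Cref{alg:linecol} does not return \textsc{Inconsistent} and the topological orderings exist. \Cref{theorem:grid embedding} then says the mapping it returns is an R/C embedding of $\Pi$. Existence is all that is needed, so the linear running time plays no role here.

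The only step needing care is making sure the definitions line up. Consistency of $\Pi$ (\Cref{def:consist_space}) and consistency of $GP_\Pi$ (\Cref{def:consistency}) coincide by definition. \Cref{prop:LC partition}, which \Cref{theorem:grid embedding} relies on, applies because $\Pi$ is consistent. I expect no genuine obstacle beyond citing these results in the right order.
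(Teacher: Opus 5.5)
Your proposal is correct and matches the paper's argument. The paper gives no separate proof of the corollary because it is exactly the conjunction of two results you invoke: the Property that an inconsistent position space admits no R/C embedding, and \Cref{theorem:grid embedding}, which together with \Cref{lemma:alg inconsistent} guarantees that \Cref{alg:linecol} returns an R/C embedding when $\Pi$ is consistent, with \Cref{def:consistency} identifying consistency of $\Pi$ and of $GP_\Pi$.
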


At this step, the notion of position space has been precisely defined from its initial description by means of ordering relations to its graph representation. We are now able to use it for a specific task, already mentioned: the discovery of a particular token pattern.

\section{Subgraph Isomorphism}
\label{sec:complexity}

\newcommand{\GP}{\ensuremath{GP_\Pi=({\cal T},{\cal A},{\cal E})}\xspace}
\newcommand{\pattern}{\ensuremath{G_\Pi\xspace}}
\newcommand{\target}{\ensuremath{H_\Pi\xspace}}
\newcommand{\TSAT}{\textsc{3-SAT}\xspace}

\newcommand{\probdef}[3]{%
  \begin{center}
  \fbox{%
    \begin{minipage}{0.93\linewidth}
      \textbf{\sc #1} \\[0.5em]
      \textbf{Input :} #2 \\[0.3em]
      \textbf{Question :} #3
    \end{minipage}
  }
  \end{center}
}

Position graphs provide a natural representation of alignment relations and, as such, can be used to detect structural configurations of interest. As discussed earlier, our primary motivation is the identification of specific arrangements of tokens that may correspond to meaningful document structures, such as tables in document analysis \citep{SaoutLS23}. Despite the additional structural constraints imposed by position graphs, we show in the next section that the problem of searching for a pattern in such graphs remains NP-complete.

To formalize this task, let $G$ and $H$ be two position graphs. We are interested in determining whether the pattern graph $H$ occurs in the target graph $G$. The graph $H$ is itself a position graph, typically smaller than $G$, whose vertices should be understood as placeholders to be matched with actual tokens of $G$.

For the complexity proofs that follow, it is convenient to use standard graph-theoretic notation, since vertices and arcs will encode elements related to Boolean formulas rather than document tokens directly. Accordingly, for a graph $X$, we denote by $V(X)$ its vertex set and by $A(X)$ its arc set. Two directed graphs $G$ and $H$ are said to be isomorphic if there exists a bijection $g: V(H) \to V(G)$ such that, for all $u,v \in V(H)$, $(u,v) \in A(H) \iff (g(u),g(v)) \in A(G)$. In the specific case of position graphs, the isomorphism must additionally preserve arc labels. More precisely, if $G$ and $H$ are position graphs with labeling functions $E_G : A(G) \to \{h,v\}$ and $E_H : A(H) \to \{h,v\}$, then an isomorphism $g$ must satisfy $E_H(u,v) = E_G(g(u),g(v))$ for every arc $(u,v) \in A(H)$. In other words, both adjacency and the type of alignment relation (horizontal or vertical) must be preserved. In summary, such a bijection $g$ allows us to identify a subgraph in $G$ that is isomorphic to $H$.  

\medskip 
Finding a specific pattern (the graph $H$) in a given position space (the target graph $G$), corresponds then to the Induced Subgraph Isomorphism problem formulated as :

 \probdef{Induced Subgraph Isomorphism}{two graphs $G$ and $H$.}{does $G$ contain an induced subgraph isomorphic to $H$}

Remember that a general subgraph of $G$ may omit some edges between chosen vertices. In contrast, an induced subgraph cannot omit them (it preserves both adjacency and non-adjacency within $A(G)$). We model pattern discovery as induced subgraph isomorphism. In position graphs, arcs represent precedence constraints (horizontal/vertical alignments), and the absence of an arc is itself informative, expressing non-comparability/no direct precedence within the extracted constraint network. Consequently, a correct occurrence of a structural template must preserve not only the required relations but also the non-relations among the matched tokens. This is exactly captured by induced subgraph isomorphism, which preserves adjacency and non-adjacency, unlike general subgraph isomorphism that may ignore extra host edges among selected vertices.

\medskip
In the following, when constructing an isomorphism $g$, we say that we identify a path $(v_1,\dots,v_i)$ of $H$ to a path $(u_1,\dots,u_i)$ of $G$  by setting $g(v_j)=u_j$ for all $j\leq i$. A \emph{transitive vertical path} (resp. \emph{transitive horizontal path}) 
$(v_1,\dots,v_i)$ is a transitively closed directed path in which every arc is labeled $v$ (resp. $h$).

\medskip

In the remainder of this section, we show that {\sc Induced Subgraph Isomorphism} is NP-complete when $G$ and $H$ are consistent position graphs. Let us recall that \TSAT is the decision problem of determining whether a CNF Boolean formula, with each clause containing exactly three literals (variables or negations), is satisfiable. \TSAT is known to be NP-complete~\citep{Cook1971}.

 % Let $P^j_i$ denote the caterpillar constituted by a path $P_n = (v_1,\dots,v_n)$ and an
\begin{construction}[Target Graph, see \Cref{fig:target-construction}]
  \label{const:target}
   Let $\varphi$ be a \TSAT 
formula  with Boolean variables set $\{x_1,\dots,x_n\}$ and clauses set $\{Q_1,\dots,Q_m\}$.
We construct a target position graph $G$ with vertex set
$$V(G) = \{o,q_i,q^*_i,q^p_i,u_j,f^i_j,t^i_j,f^*_j,t^*_j,\ell^i_j,\bar{\ell}^i_j \mid p\in [3],  i \in [m] ,j \in [n]\}.$$

\begin{itemize}
  \item Introduce a transitive horizontal path $A=(o,q_1,\dots,q_m)$ and a transitive vertical path $B=(o,u_1,\dots,u_n)$.
  \item For each clause $Q_i$, 
  \begin{itemize}
    \item introduce the transitive horizontal path $(q^*_i,q^1_i,q^2_i,q^3_i)$, 
    \item introduce the vertical arc $(q_i,q^*_i)$, and
    \item for each $k \leq 3$, let $x_j$ be the $k^{th}$ variable occurring in $Q_i$, introduce the vertical arc $(q^k_i,\ell^i_j)$ if $x_j$ appears positively in $Q_i$ and the vertical arc $(q^k_i,\bar{\ell}^i_j)$ otherwise.
  \end{itemize}
  \item For each variable $x_j$, 
  \begin{itemize}
    \item introduce a transitive horizontal path $(u_j,f^*_j,t^*_j)$, and 
    \item introduce two transitive vertical paths $F_j=(f^*_j,f^1_j,\dots,f^m_j)$ and $T_j=(t^*_j,t^1_j,\dots,t^m_j)$, 
  \end{itemize}
  \item For each clause $Q_i$ and each variable $x_j$, introduce the two horizontal arcs $(t^i_j,\bar\ell^i_j)$ and $(f^i_j,\ell^i_j)$.
\end{itemize}
 \end{construction}

\begin{figure}
 \scalebox{0.85}{
  \begin{tikzpicture}
   
   \node[smallvertex,label=above:{$o$}] (z) at (0,1) {};
   \foreach \x in {1,2,3}{
     \node[smallvertex,label=90:{$q_\x$}] (qx\x) at (-2+4*\x,1) {};
     \node[smallvertex,label=left:{$q^*_\x$}] (q\x0) at (-2+4*\x,0) {};
     \draw[thick,->,draw=v] (qx\x) -- (q\x0) ;
     \foreach[count=\j from 0] \i in {1,2,3}{
       \node[smallvertex,label=90:{$q^\i_\x$}] (q\x\i) at (-2+4*\x+\i,0) {};
       \draw[thick,->,draw=h] (q\x\j) -- (q\x\i) ;
     }
   }
   \def\s{0.7}
   \def\Y{3.5}
   \foreach \y in {1,2,3}{
     \node[smallvertex,label=left:{$u_\y$}] (u\y) at (0,\Y-\y*7*\s) {};
     \node[smallvertex,label=above:{$f^*_\y$}] (f\y0) at (1,\Y-\y*7*\s) {};
     \node[smallvertex,label=above:{$t^*_\y$}] (t\y0) at (2,\Y-\y*7*\s) {};
     \draw[thick,->,draw=h] (u\y) -- (f\y0) --(t\y0);
     \foreach[count=\j from 0] \i in {1,2,3}{
       \node[smallvertex,label=left:{$f^\i_\y$}] (f\y\i) at (1,\Y-\y*7*\s-2.5*\s-\i*\s) {};
       \draw[thick,->,draw=v] (f\y\j) -- (f\y\i) ;
       \node[smallvertex,label=left:{$t^\i_\y$}] (t\y\i) at (2,\Y-\y*7*\s-\i*\s) {};
       \draw[thick,->,draw=v] (t\y\j) -- (t\y\i) ;
     }
   }
   \foreach \q/\v/\l in {q11/t11/$\bar\ell^1_1$,q12/f21/$\ell^1_2$,q13/f31/$\ell^1_3$
     ,q21/f12/$\ell^2_1$,q22/f22/$\ell^2_2$,q23/f32/$\ell^2_3$
     ,q31/t13/$\bar\ell^3_1$,q32/t23/$\bar\ell^3_2$,q33/t33/$\bar\ell^3_3$}
   {
     \node[smallvertex,label=right:{\l}] (l) at (\q |- \v) {};
     \draw[thick,->,draw=v] (\q) -- (l);
     \draw[thick,->,draw=h] (\v) -- (l);
   }

   \foreach \v/\l in {t12/$\bar\ell^2_1$,t21/$\bar\ell^1_2$,t22/$\bar\ell^2_2$,t31/$\bar\ell^1_3$,t32/$\bar\ell^2_3$,
     f11/$\ell^1_1$,f13/$\ell^3_1$,f23/$\ell^3_2$,f33/$\ell^3_3$}{
     \node[smallvertex,label=right:{\l}] (l) at (q11 |- \v) {};
     \draw[thick,->,draw=h] (\v) -- (l);
   }
   \foreach \a/\b/\c in {z/qx1/h,qx1/qx2/h,qx2/qx3/h,z/u1/v,u1/u2/v,u2/u3/v}
   {
     \draw[thick,->,draw=\c] (\a) -- (\b);
   }

%   \node[smallvertex] (C) at (q11 |- t11) {};
  
 \end{tikzpicture}
 }
\caption{Example of graphs produced by \Cref{const:target} with input formula $\varphi = (\neg x_1 \vee x_2 \vee x_3) \wedge (x_1\vee x_2 \vee x_3) \wedge (\neg x_1 \vee \neg x_2 \vee \neg x_3)$. For greater clarity, the transitive arcs have not been depicted. }
\label{fig:target-construction}
\end{figure}

\begin{property}
\label{prop:target_consist}
The graph $G$ defined by Construction \ref{const:target} is a consistent position graph
\end{property}

\begin{proof}
  First, we show that the graph of rows $G^\Pi_R$ does not contain a cycle.  
  Consider the following vertex partition $P_0  \cup \dots \cup P_n$ of $G$:
  \begin{itemize}
    \item $P_0 = \{o,q_i,q^*_i,q^p_i \mid i \in [m], p\in [3]\}$,
    \item $\forall j \in [n], P_j = \{u_j,f^i_j,t^i_j,f^*_j,t^*_j,\ell^i_j,\bar{\ell}^i_j \mid   i \in [m] \}$ 
  \end{itemize}
  Observe that by construction, each row of $G$ is contained in a single part. Moreover, no part $P_k$ contains a cycle of $G^\Pi_R$.
  Now, we show that for any vertical arc $(u,v)$ such that $u\in P_i$, $v \in P_j$ and $P_i \neq P_j$, we have $i <j$. 
  For the arc $(o,u_1)$, we have $o \in P_0$ and $u_1 \in P_1$. For any arc $(u_i,u_{i+1})$, we have $u_i \in P_i$ and $u_{i+1} \in P_{i+1}$. For any arc $(q^p_i,\ell^i_j)$, we have $q^p_i \in P_0$ and $\ell^i_j \in P_j$, with $j\geq 1$. Finally, for any arc $(q^p_i,\bar\ell^i_j)$, we have $q^p_i \in P_0$ and $\bar\ell^i_j \in P_j$, with $j\geq 1$. Hence, it follows that the graph of rows does not contain a cycle.
  
%$$V(G) = \{o,q_i,q^*_i,q^p_i,u_j,f^i_j,t^i_j,f^*_j,t^*_j,\ell^i_j,\bar{\ell}^i_j \mid p\leq 3,  i \leq m ,j \leq n\}.$$
  Further, we show in a similar manner that the graph of columns $G^\Pi_C$ does not contain a cycle.
  Consider the following vertex partition $P_0  \cup \dots \cup P_n$ of $G$:
  \begin{itemize}
    \item $P_0 = \{o,u_j,f^i_j,t^i_j,f^*_j,t^*_j\mid  i \in [m] ,j \in [m]\}$,
    \item $\forall i \in [m], P_i = \{q_i,q^*_i,q^j_i,\ell^i_j,\bar{\ell}^i_j \mid p\in [3] ,j \in [n]\}$.
  \end{itemize}
  Observe that by construction, each column of $G$ is contained in a single part. Moreover, no part $P_k$ contains a cycle of $G^\Pi_C$.
  Now, we show that for any horizontal arc $(u,v)$ such that $u\in P_i$, $v \in P_j$ and $P_i \neq P_j$, we have $i <j$. 
  For the arc $(o,q_1)$, we have $o \in P_0$ and $q_1 \in P_1$. For any arc $(q_i,q_{i+1})$, we have $q_i \in P_i$ and $q_{i+1} \in P_{i+1}$. For any arc $(t^i_j,\bar\ell^i_j)$, we have $t^i_j \in P_0$ and $\bar\ell^i_j \in P_i$, with $i\geq 1$. Finally, for any arc $(f^i_j,\ell^i_j)$, we have $f^i_j \in P_0$ and $\ell^i_j \in P_i$, with $i\geq 1$. Hence, it follows that the graph of columns does not contain a cycle. Hence, By \Cref{lemma:alg inconsistent}, $G$ is consistent.
\end{proof}

Let us now define the graph that corresponds to the pattern. 

\begin{construction}[Pattern, see \Cref{fig:pattern-construction}]
   \label{const:pattern}
Let $\varphi$ be a \TSAT 
formula with $n$ variables and $m$ clauses.
We construct a position graph $H$ with vertex set
$$V(H) = \{w,k_i,k^*_i,s_i,s'_i,v_j,y^*_j,y^i_j,z^i_j \mid i \in [m], j \in [n]\},$$
as follows.
\begin{itemize}
  \item Introduce a transitive horizontal path $A'=(w,k_1,\dots,k_m)$ and a transitive vertical path $B'=(w,v_1,\dots,v_n)$.
  \item For each clause $Q_i$, introduce the horizontal arc $(k^*_i,s_i)$ and the two vertical arcs $(k_i,k^*_i)$ and $(s_i,s'_i)$. 
  \item For each variable $x_j$, introduce the vertical arc $(v_j,y^*_j)$ and a transitive horizontal path $X_j=(y^*_j,y^1_j,\dots,y^m_j)$.
  \end{itemize}
\end{construction}

\begin{figure}[t]
 \scalebox{0.85}{
  \begin{tikzpicture}
   
   \node[smallvertex,label=above:{$w$}] (z) at (0,1) {};
   \foreach \x in {1,2,3}{
     \node[smallvertex,label=90:{$k_\x$}] (qx\x) at (1+2.5*\x,1) {};
     \node[smallvertex,label=left:{$k^*_\x$}] (q\x0) at (1+2.5*\x,0) {};
     \draw[thick,->,draw=v] (qx\x) -- (q\x0) ;
     \foreach[count=\j from 0] \i in {1}{
       \node[smallvertex,label=90:{$s_\x$}] (q\x\i) at (1+2.5*\x+\i,0) {};
       \draw[thick,->,draw=h] (q\x\j) -- (q\x\i) ;
       \node[smallvertex,label=0:{$s'_\x$}] (c\x\i) at (1+2.5*\x+\i,-1) {};
       \draw[thick,->,draw=v] (q\x\i) -- (c\x\i) ;
     }
   }
   \def\s{0.7}
   \def\Y{2}
   \foreach \y in {1,2,3}{
     \node[smallvertex,label=left:{$v_\y$}] (u\y) at (0,\Y-\y*4.5*\s) {};
     \node[smallvertex,label=above:{$y^*_\y$}] (t\y0) at (1,\Y-\y*4.5*\s) {};
     \draw[thick,->,draw=h] (u\y)  --(t\y0);
     \foreach[count=\j from 0] \i in {1,2,3}{
       \node[smallvertex,label=left:{$y^\i_\y$}] (t\y\i) at (1,\Y-\y*4.5*\s-\i*\s) {};
       \node[smallvertex,label=right:{$z^\i_\y$}] (b\y\i) at (q10 |- t\y\i) {};
       \draw[thick,->,draw=v] (t\y\j) -- (t\y\i);
       \draw[thick,->,draw=h] (t\y\i) -- (b\y\i);
     }
   }

   \foreach \a/\b/\c in {z/qx1/h,qx1/qx2/h,qx2/qx3/h,z/u1/v,u1/u2/v,u2/u3/v}
   {
     \draw[thick,->,draw=\c] (\a) -- (\b);
   }
  
 \end{tikzpicture}
 }
\caption{Example of a graph produced by \Cref{const:pattern} with input formula $\varphi = (\neg x_1 \vee x_2 \vee x_3) \wedge (x_1\vee x_2 \vee x_3) \wedge (\neg x_1 \vee \neg x_2 \vee \neg x_3)$. }
\label{fig:pattern-construction}
\end{figure}

% In the following, we only consider \TSAT formulas in which no literal appears in more than $m-2$ clauses. 
% This restriction does not affect NP-completeness, since one can always add dummy clauses involving new variables to enforce this property.

\begin{property}
The graph $H$ defined by Construction \ref{const:pattern} is a consistent position graph
\end{property}

The proof is similar to that of Property \ref{prop:target_consist} and is even simpler. 

\begin{lemma}
  \label{lemma:nice solution}
  Let $\varphi$ be a \TSAT formula and let $G$ and $H$ be two position graphs produced by \Cref{const:target,const:pattern}, respectively. If $G$ contains a subgraph isomorphic to $H$, then there is an isomorphism $f$ such that:
  \begin{enumerate}
    \item $A'$ is identified with $A$, $B'$ is identified with $B$, and
    \item for each variable $x_j \in \varphi$, $X_j$ is identified with either $T_j$ or $F_j$.
  \end{enumerate}
\end{lemma}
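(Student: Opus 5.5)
The plan is to show that \emph{every} label-preserving induced embedding $g:V(H)\to V(G)$ already satisfies the two properties, so we can take $f=g$. The argument is a degree and chain-counting one: I first pin down the image of $w$, then propagate along the transitive paths. Before starting, I note that in \Cref{const:pattern} the path $X_j=(y^*_j,y^1_j,\dots,y^m_j)$ must be read as a transitive \emph{vertical} path, as \Cref{fig:pattern-construction} shows; otherwise it could never be identified with $F_j$ or $T_j$. I will also assume $m\ge 4$, which is harmless: clauses of $\varphi$ can be duplicated without changing satisfiability.

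\textbf{Step 1: $g(w)=o$.} In $H$, $w$ has $m$ outgoing $h$-arcs (to $k_1,\dots,k_m$) and $n$ outgoing $v$-arcs, and $g$ preserves arcs and labels. So $g(w)$ must have at least $m$ horizontal successors and at least $n$ vertical successors in $G$. I list the horizontal out-degrees in $G$:
\begin{itemize}
\item $o$ has $m$;
\item $q_i$ has $m-i<m$;
\item $q^*_i$ has $3$, and $q^p_i$ has at most $2$;
\item $u_j$ has $2$, and $f^*_j$ has $1$;
\item $f^i_j$ and $t^i_j$ have $1$;
\item all other vertices have $0$.
\end{itemize}
Since $m\ge 4$, the only candidate is $o$.

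\textbf{Step 2: $A'$ and $B'$.} The horizontal successors of $o$ are exactly $q_1,\dots,q_m$, so $g$ maps $\{k_1,\dots,k_m\}$ bijectively onto them. Both $A'$ and $A$ are transitive horizontal paths, and $g$ preserves arc direction. The order is therefore forced: $k_i<_h k_{i'}$ implies $g(k_i)<_h g(k_{i'})$, which gives $g(k_i)=q_i$. In the same way, the vertical successors of $o$ are exactly $u_1,\dots,u_n$, so $g(v_j)=u_j$. This is item 1.

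\textbf{Step 3: $X_j$.} Now $y^*_j$ is a horizontal successor of $v_j$, so $g(y^*_j)$ is a horizontal successor of $u_j$, namely $f^*_j$ or $t^*_j$. In the first case, $y^1_j,\dots,y^m_j$ are $m$ vertical successors of $y^*_j$. Their images must be vertical successors of $f^*_j$, and these are exactly the $m$ vertices $f^1_j,\dots,f^m_j$ of $F_j$. As in Step 2, the transitivity of both paths forces $g(y^i_j)=f^i_j$, so $X_j$ is identified with $F_j$. The case $t^*_j$ gives $T_j$ symmetrically. This is item 2.

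\textbf{Main obstacle.} The delicate point is Step 1: it requires checking every vertex type of $G$, including the transitive arcs omitted from \Cref{fig:target-construction}, to rule out an alternative image for $w$. It is also where the padding assumption $m\ge 4$ enters, to exclude $q^*_i$ with its three horizontal successors. Once $g(w)=o$ is fixed, the rest is forced, because each relevant set of successors in $G$ has exactly the cardinality needed.
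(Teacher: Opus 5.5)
Your proof is correct and takes essentially the paper's route: fix $w\mapsto o$, deduce $B'\mapsto B$, then use that the only horizontal successors of $u_j$ are $f^*_j,t^*_j$. Your horizontal out-degree count is the paper's ``$A$ is the unique transitive horizontal path on $m+1$ vertices'' restated via the chain condition, and you make the pointwise identifications $g(k_i)=q_i$ and $g(y^i_j)=f^i_j$ (or $t^i_j$) explicit where the paper leaves them implicit. Your $m\ge 4$ remark is well taken: for $m=3$ the clause path $(q^*_i,q^1_i,q^2_i,q^3_i)$ also has $m+1$ vertices, so the paper's uniqueness claim fails there. The padding can be avoided by also counting vertical successors, since for $n\ge 2$ the vertex $o$ is the only one with at least $m$ horizontal and at least $n$ vertical successors.
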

\begin{proof}~\\
\vspace{-0.5cm}  \begin{enumerate}
    \item By construction, $A$ is the only transitive horizontal path containing $m+1$ vertices, so $A'$ is necessarily identified with $A$. It follows that $w$ is identified with $o$ and $B'$ is identified with $B$.
    \item By the previous item, for each variable $x_j \in \varphi$, $v_j$ is identified with $u_j$. Since $u_j$ has two horizontal neighbors $f^*_j$ and $t^*_j$, $y^*_j$ is identified with one of them. It follows that $X_j$ is identified with either $T_j$ or $F_j$.
\end{enumerate}
\end{proof}

\begin{theorem}
  \label{theorem:npc}
  {\sc Induced Subgraph Isomorphism} is NP-complete even when restricted to position graphs.
\end{theorem}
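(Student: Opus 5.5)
The plan is to show membership in NP and then give a polynomial reduction from \TSAT using \Cref{const:target,const:pattern}, which by Property~\ref{prop:target_consist} and the property following \Cref{const:pattern} produce consistent position graphs. Membership is routine. A certificate is an injective map $g: V(H)\to V(G)$, and we check in polynomial time that for every ordered pair $(a,b)$ of $V(H)$, $(a,b)\in A(H)$ iff $(g(a),g(b))\in A(G)$, with equal labels. Both constructions have $O(nm)$ vertices and $O(n m^2+m^2+n^2)$ arcs, so the reduction is polynomial. It remains to show that $\varphi$ is satisfiable iff $G$ contains an induced subgraph isomorphic to $H$. Throughout I read $X_j$ as a transitive \emph{vertical} path, as in \Cref{fig:pattern-construction}, so that it can be identified with $T_j$ or $F_j$.

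\emph{From an assignment to an embedding.} Fix a satisfying assignment. Identify $A'$ with $A$ and $B'$ with $B$. For each variable $x_j$, identify $X_j$ with $T_j$ if $x_j$ is true and with $F_j$ otherwise. Then map $z^i_j$ to the unique horizontal out-neighbour of the image of $y^i_j$: this is $\bar\ell^i_j$ when $x_j$ is true and $\ell^i_j$ when $x_j$ is false. For clause $Q_i$, map $k^*_i\mapsto q^*_i$. Choose a literal of $Q_i$ that is satisfied, say at position $p$, and map $s_i\mapsto q^p_i$ and $s'_i$ to the vertical out-neighbour of $q^p_i$. If this literal is $x_j$, that neighbour is $\ell^i_j$ and $x_j$ is true, so $\ell^i_j$ is not used by any $z$. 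If the literal is $\neg x_j$, the neighbour is $\bar\ell^i_j$ and $x_j$ is false, so again there is no clash. Hence the map is injective.

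For inducedness, list all arcs of $G$ among the image vertices and check that each comes from an arc of $H$. The transitive arcs inside $A$, $B$, $T_j$ and $F_j$ match those of $A'$, $B'$ and $X_j$. The arcs $(u_j,f^*_j)$ and $(u_j,t^*_j)$ involve only the chosen one of $f^*_j,t^*_j$, because the other is not in the image. Only one $q^p_i$ per clause is in the image, so the horizontal path $(q^*_i,q^1_i,q^2_i,q^3_i)$ contributes the single arc $(q^*_i,q^p_i)$. A literal vertex $z$-image may have a vertical in-arc from some $q^{p'}_i$, but such a $q^{p'}_i$ is in the image only if $p'=p$, and the vertical out-neighbour of $q^p_i$ is the free literal vertex chosen for $s'_i$. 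Conversely, the literal vertex used for $s'_i$ has its horizontal in-neighbour in the unchosen path among $T_j,F_j$. I would assume, as is standard, that no variable occurs twice in a clause, so that positions $p$ correspond to distinct variables.

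\emph{From an embedding to an assignment.} Given an induced copy, apply Lemma~\ref{lemma:nice solution} to assume that $A'\mapsto A$, $B'\mapsto B$, and that each $X_j$ is identified with $T_j$ or $F_j$. Set $x_j$ true iff $X_j\mapsto T_j$. The vertex $k_i$ maps to $q_i$, whose only vertical out-neighbour is $q^*_i$, so $k^*_i\mapsto q^*_i$. Then $s_i$ maps to a horizontal out-neighbour $q^p_i$ that itself has a vertical out-neighbour, which forces $p\in[3]$; so $s'_i$ maps to the literal vertex below $q^p_i$, say $\ell^i_j$ in the positive case. If $x_j$ were false, then $F_j$ is in the image and $\ell^i_j=g(z^i_j)$, contradicting injectivity. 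Note also that the arc $(f^i_j,\ell^i_j)$ would violate inducedness. Hence $x_j$ is true, and the negative case $\bar\ell^i_j$ is symmetric. So every clause is satisfied.

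The main obstacle is bookkeeping rather than any single idea. The difficult part is making the converse direction airtight: Lemma~\ref{lemma:nice solution} is stated for subgraphs, so I would recheck that its uniqueness arguments (e.g.\ $A$ being the only transitive horizontal path on $m+1$ vertices) still force the identifications, and that $k^*_i$ and $s_i$ cannot land elsewhere. I would also verify exhaustively that no stray arc of $G$, in particular the transitive arcs of $(q^*_i,\dots,q^3_i)$ and $(u_j,f^*_j,t^*_j)$, appears among image vertices in the forward direction.
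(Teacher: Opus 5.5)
Your proposal is correct and follows essentially the same route as the paper's proof. Both show NP membership and then reduce from \TSAT via Constructions~\ref{const:target} and~\ref{const:pattern}: the forward map sends $X_j$ to $T_j$ or $F_j$ and $s'_i$ to the literal vertex of a satisfied literal, and the converse uses Lemma~\ref{lemma:nice solution} together with the injectivity clash at $\ell^i_j$ or $\bar\ell^i_j$. Your reading of the pattern as drawn in the figure (vertical $X_j$, horizontal arcs $(v_j,y^*_j)$ and $(y^i_j,z^i_j)$) is the one the proof needs. The uniqueness of $A$ that you want to recheck holds once $m\ge 4$, since for $m=3$ the clause rows $(q^*_i,q^1_i,q^2_i,q^3_i)$ also qualify; this is harmless because clauses can be duplicated.
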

\begin{proof}
The problem belongs to NP, since given an injective mapping $g : V(H)\to V(G)$, one can verify in polynomial time that it defines a label-preserving induced subgraph isomorphism by checking adjacency, non-adjacency, and arc labels. Let $\varphi$ be a \TSAT formula and let $G$ and $H$ be two position graphs produced by \Cref{const:target,const:pattern}, respectively. We show that $\varphi$ is satisfiable if and only if $G$ contains an induced subgraph isomorphic to $H$.
  \begin{itemize}
   \item  Let $\beta$ be a satisfying assignment for $\varphi$. 
We construct the following isomorphism $g$ between $H$ and a subgraph of $G$. 
First, we identify $A'$ with $A$ and $B'$ with $B$.
For each variable $x_j \in \varphi$, if $x_j$ is assigned to \texttt{true}, 
we identify $X_j$ with $T_j$ and for each $i\leq m$, we identify $z^i_j$ with $\bar\ell^i_j$. If $x_j$ is assigned to \texttt{false}, we identify $X_j$ with $F_j$ and for each $i\leq m$, we identify $z^i_j$ with $\ell^i_j$.
For each clause $Q_i$, we identify $k^*_i$ with $q^*_i$.
It remains to assign $g(s_i)$ and $g(s'_i)$ for each clause $Q_i$. 
For every clause $Q_i$, there exists a variable $x_j \in Q_i$ 
such that its assignment in $\beta$ satisfies $Q_i$. Suppose that $x_j$ is the $t^{th}$ variable occurring in $Q_i$.
If $x_j$ occurs positively (resp. negatively) in $Q_i$, 
then by previous mapping, $\ell^i_j$ are not associated yet with any node (resp. $\bar\ell^i_j$ are not associated yet with any node), 
and we can set $g(s_i)=q^t_i$ and $g(s'_i)=\ell^i_j$ (resp. $g(s'_i)=\bar\ell^i_j$). 
    We have constructed an injective mapping $g$ such that for all pairs of vertices $u,v$, $(u,v) \in A(H)$ if and only if $(g(u),g(v)) \in A(G)$ and such that for all arc $(u,v) \in A(H), (u,v)$ and $(g(u),g(v))$ have the same label. It remains to observe that the image of $g$ induces in $G$ exactly the arcs prescribed by $H$. Indeed, by construction, every selected vertex lies on one of the designated clause or variable gadgets, and the only arcs between selected vertices are precisely those used to realize the paths and attachments of $H$. In particular, no additional arc of $G$ connects two vertices in the image of $g$ unless the corresponding vertices are adjacent in $H$. Therefore, $g$ is an induced, label-preserving subgraph isomorphism from $H$ into $G$. Hence, $G$ contains an induced subgraph isomorphic to $H$.

    \item Let $g$ be an isomorphism respecting \Cref{lemma:nice solution}. We construct an assignment for $\varphi$ as follows: for each variable $x_i\in\varphi$, we assign $x_i$ to \texttt{true} if $X_i$ is identified with $T_i$ and we assign $x_i$ to \texttt{false} otherwise. Toward a contradiction, suppose there is a clause $Q_i$ that is not satisfied. Suppose that  $g(s'_i)=\ell^i_j$. By construction, the variable $x_j$ occurs positively in $Q_i$. Since $Q_i$ is not satisfied, $x_j$ is assigned to \texttt{false} and so, $X_j$ is identified with $F_j$. In particular, we have $g(y^i_j)=f^i_j$. But then, since $\ell^i_j$ is the only horizontal neighbor of $f^i_j$, we have necessarily $g(z^i_j)=\ell^i_j$. Thus, $|g^{-1}(\ell^i_j)|\geq 2$ which contradicts $g$ being an isomorphism. If $g(s'_i)=\bar\ell^i_j$, then we also obtain a contradiction by symmetry.
    Therefore, all clauses are satisfied by $\beta$, which concludes the proof.
  \end{itemize}
\end{proof}

\section{Existing Approaches: Relationships and Limitations}
\label{sec:related_works}
This section aims to review several types of works that can be connected to our main representation concern. We argue that the \emph{position space} is a distinct construct that cannot be directly replaced by existing methods.

\subsection{Ordered structures, transitive orientations, and multidimensional patterns}

Our representation is related to several works connecting order theory and graph structure. Partially ordered sets (posets) admit graphical encodings via comparability graphs, where edges reflect comparabilities in the underlying order; equivalently, these are precisely the graphs that admit a transitive orientation~\citep{golumbic1977,golumbic1980}. This perspective has been extensively developed, including recognition and orientation algorithms for transitively orientable graphs~\citep{golumbic1977,golumbic1980}. Position graphs may thus be viewed as a constrained, edge-labeled analogue of such structures: their horizontal and vertical labels restrict which transitive orientations correspond to valid spatial interpretations and impose compatibility conditions beyond transitivity alone.

A complementary point of view is developed in the theory of poset dimension, defined as the minimum number of linear extensions whose intersection yields the poset. Dimension theory connects order structure to geometric realizability in $\mathbb{R}^d$ under the product order and provides criteria for when an ordering admits a low-dimensional coordinate representation~\citep{trotter1992}. 

Although related in spirit to two-dimensional order embeddings, our framework differs by explicitly separating two alignment relations and enforcing cross-relation consistency constraints not captured by standard dimension arguments. More precisely, a poset has dimension~2 when its order relation can be expressed as the intersection of two linear extensions~\citep{trotter1992}. In such settings, the two linear orders 
represent different views of the same underlying hierarchy, and a pair 
$(x,y)$ is comparable only if both extensions agree on their ordering. In contrast, 
a position space defines two independent strict partial orders, 
$<_h$ and $<_v$, each with a distinct spatial interpretation (horizontal versus 
vertical). Moreover, the exclusivity condition of Definition~\ref{def:compatibility} prevents a pair of tokens from being comparable in both relations. As a consequence, position spaces 
do not correspond to dimension-2 posets: instead of intersecting two total orders 
to define a single order relation, they maintain two disjoint precedence structures 
that cannot be merged into a unified poset.

Classical grid or geometric embeddings of posets (e.g., using dominance orders, 
rectangle representations, or boxicity embeddings) aim to represent a single 
partial order in two-dimensional space by assigning each element coordinates or 
regions whose geometric relations preserve the order. Although consistent position 
spaces also admit a two-dimensional layout through their row/column assignment, 
this embedding does not represent a global poset order. Instead, it preserves 
$<_h$ and $<_v$ independently, and the exclusivity condition prevents the existence 
of a single coordinate-wise order capturing both. Thus, despite similarities, position spaces are structurally distinct from classical grid embeddings: they encode two orthogonal alignment relations rather than a two-dimensional realization of a single poset.

Let us also remark that the direct product of two linear orders defines the classical $2$-dimensional grid poset: $(a,b) \leq (c,d)$ if and only if $a \leq c$ and $b \leq d$. This coordinate-wise comparison requires that ordered pairs satisfy both dimensional constraints. 
Position spaces behave differently: comparability is defined separately by one alignment relation, either horizontal or vertical, but never both. Hence, position spaces do not directly form a 
subclass of product orders.

Finally, pattern detection in ordered structures also relates to  permutation patterns and the avoidance/containment paradigm, where containment is defined through preservation of relative order in subsequences~\citep{bonaPermutations2004,marcusTardos2004}. Classical permutation patterns operate over a single total order; in contrast, our setting involves two interacting order relations together with labeled edges, connecting it more closely to multidimensional generalizations of containment than to the one-dimensional theory of permutation patterns.

\subsection{Subgraph and Induced Subgraph Isomorphism}

Many works investigate graph pattern matching—particularly subgraph and induced-subgraph isomorphism, as the task of identifying occurrences of a smaller pattern inside a larger structure, a problem that becomes especially intricate for labeled graphs and directed acyclic targets. Classical and modern practical approaches emphasize \emph{filter-and-search} pipelines and constraint-programming–inspired pruning strategies, especially for large graphs with small pattern queries~\citep{mccreesh2018}. Recent developments introduce new heuristics and structural refinements tailored for labeled or directed settings, as illustrated by ~\citep{AsilerYG22}, which provides an efficient subgraph‑isomorphism framework for large graph databases, or 
FiPE~\citep{LuZZZ25}, a subgraph‑matching algorithm that eliminates redundant computations by defining fine‑grained equivalence relations over vertex pairs and multi‑vertex patterns. The Glasgow Subgraph Solver further exemplifies state-of-the-art constraint-guided backtracking with support for both induced and non-induced variants and parallel execution~\citep{McCreeshP020}. LAD2025~\citep{Solnon26} also provides a modern constraint‑based solver for the subgraph‑isomorphism problem, strengthening domain filtering and constraint propagation to achieve competitive performance on a broad range of benchmark instances. 

Since position graphs in our setting are directed and acyclic, the problem naturally falls within labeled pattern matching on DAG-like structures, where the induced requirement—preserving both adjacency and non-adjacency—typically increases difficulty relative to non-induced matching; empirical studies show that induced and non-induced instances can behave quite differently in practice, even when both formulations remain NP-complete~\citep{mccreesh2018}. Induced subgraph isomorphism itself is a classical NP-complete problem~\citep{garey1979}. Despite this worst-case hardness, decades of practical work—including recent large-scale experiments—demonstrate that many real-world instances are solvable using modern pruning and search strategies, although hard families persist and motivate the exploitation of domain-specific structure, particularly in specialized graphical representations such as DAG-like position graphs.

\subsection{Qualitative Spatial Reasoning}
Qualitative Spatial Reasoning (QSR) \citep{ChenCLWOY15}  allows reasoning about spatial relationships without relying on precise numerical coordinates, unlike quantitative approaches. QSR focuses on symbolic and relational descriptions of space, which are particularly useful in domains where exact measurements are unavailable, unnecessary, or computationally expensive.

One of the most known frameworks in QSR is RCC8 (Region Connection Calculus) \citep{Randell92}, which defines eight mutually exclusive and jointly exhaustive base relations: Disconnected (DC), Externally Connected (EC), Partially Overlapping (PO), Equal (EQ), Tangential Proper Part (TPP), Non-Tangential Proper Part (NTPP), and their inverses (TPPi, NTPPi). These relations capture fundamental aspects of spatial configuration without relying on precise numerical coordinates, making RCC8 particularly suitable for applications \citep{CohnBGG97} in geographic information systems, spatial databases, and AI reasoning tasks . These rules  focus on topological relationships between regions (e.g., disconnected, overlapping, and contained). RCC8 abstracts metric details but still assumes continuous spatial regions. The relations provide a logical calculus system for reasoning about connectivity and containment.

Our approach focuses on the relative positioning of discrete tokens using horizontal and vertical alignment relations. The spatial organization is represented by strict partial orders rather than topological predicates. We also emphasize rows and columns and their compatibility, enabling declarative modeling of layout-like structures. Hence, QSR is more region-centric, whereas the proposed approach is token-centric.

However, RCC8 does not encode direction or order (e.g., “A is left of B” or “C is above D”). To recover the horizontal (h) and vertical (v) edge semantics of a Position Graph, we should combine RCC8 with a directional calculus such as the Cardinal Direction Calculus (CDC) \citep{Frank92} or Rectangle Algebra (RA) \citep{MukerjeeJ90}.Example \ref{ex:RCC8} illustrates these relationships between position graphs and description logics. 

\begin{example}[Mapping Position Graphs to Qualitative Constraints]
\label{ex:RCC8}
   
Assume a Position Graph $({\cal T},{\cal A},{\cal E})$. We may construct a qualitative spatial reasoning constraint network with one region variable per token. For each token $t\in {\cal T}$, create a region variable $R_t$. We can use RCC8 to forbid overlaps where the graph prohibits them, and to allow/force contacts where alignment suggests adjacency. If, in the intended layout, two different columns in the same row must not overlap, set:
\[
\forall\,t,t' \text{ in same row but different columns:}\quad (R_t,R_{t'}) \in \{DC,EC\}.
\]
Choose DC if gaps are mandatory; choose EC if touching is permitted (adjacent cells). Analogous reasoning is performed for rows:
\[
\forall\,t,t' \text{ in same column but different rows:}\quad (R_t,R_{t'}) \in \{DC,EC\}.
\]
If the model needs cells or rows/columns as container regions, we may use TPP/NTPP to encode that a token lies properly inside its row/column region:
\[
(R_t, R_{row(t)}) \in \{TPP,NTPP\},\qquad
(R_t, R_{col(t)}) \in \{TPP,NTPP\}.
\]
TPP vs NTPP distinguishes whether the token touches the boundary of the container.

To capture order (left--right for $h$, above--below for $v$), add CDC (or RA) constraints. For every horizontal edge $E(t,t')=h$:
\[
\mathrm{CDC}(R_t,R_{t'}) \in \{\mathrm{W},\mathrm{W\!-\!NW},\mathrm{W\!-\!SW}\},
\]
i.e., $t$ lies west of $t'$. (You can pick a single cardinal relation W if you want strict west-of.). For every vertical edge $E(t,t')=v$:
\[
\mathrm{CDC}(R_t,R_{t'}) \in \{\mathrm{N},\mathrm{N\!-\!NE},\mathrm{N\!-\!NW}\},
\]
i.e., $t$ lies north of $t'$. (Likewise, choose N for strict north-of)

Mixing RCC8 with CDC is, of course, computationally expensive, as shown in \citep{LiuLR09}.
\end{example}

Note that \citep{Rohrig94} proposed a  theory for qualitative spatial reasoning by expressing spatial concepts through simple order relations on low-dimensional structures. 

\section{Conclusion}

In this work, we introduced position spaces and their graph representation, position graphs, as a lightweight qualitative framework for reasoning about the relative organization of discrete tokens using two strict partial orders capturing horizontal and vertical precedence.
By imposing a chain (no-branching) condition on successors and predecessors and a compatibility requirement that forbids mixed horizontal/vertical chains between tokens already ordered in one direction, we formalized when such positional information admits a coherent two-dimensional interpretation in terms of rows and columns.
We then provided a structural characterization of consistency for position graphs and showed how consistency can be checked constructively.
Building on this characterization, we designed an R/C embedding algorithm that assigns row and column indices to tokens whenever the position space is consistent, and proved that it runs in linear time in the size of the graph.

Beyond consistency, we studied structural pattern discovery as a graph-matching problem and modeled it as label-preserving induced subgraph isomorphism between position graphs. We established a strong hardness boundary by proving that induced subgraph isomorphism remains NP-complete even when restricted to consistent position graphs, despite the substantial structural constraints imposed by the position space axioms.
This separation clarifies which reasoning tasks can be solved reliably at scale and which will generally require heuristics or specialized solvers in practical applications such as document layout analysis.

As future work, an important direction is to identify maximal tractable fragments of the induced matching problem within position graphs, enabling efficient discovery of useful pattern families while retaining the formal benefits of the proposed representation.

\bibliographystyle{empty}
%\bibliography{./biblio}

%% BioMed_Central_Bib_Style_v1.01

\end{document}